\newcommandx{\unsure}[2][1=]{\todo[linecolor=red,backgroundcolor=red!25,bordercolor=red,#1]{#2}}
\newcommandx{\change}[2][1=]{\todo[linecolor=blue,backgroundcolor=blue!25,bordercolor=blue,#1]{#2}}
\newcommandx{\info}[2][1=]{\todo[linecolor=OliveGreen,backgroundcolor=OliveGreen!25,bordercolor=OliveGreen,#1]{#2}}
\newcommandx{\improvement}[2][1=]{\todo[linecolor=Plum,backgroundcolor=Plum!25,bordercolor=Plum,#1]{#2}}
\newtheorem{thm}{Theorem}
\newcommand{\Actions}{\mathcal{A}}
\newcommand{\Obs}{\mathcal{O}}
\newcommand{\be}{\bm{\eta}}
\newcommand{\bw}{\mathbf{w}}
\newcommand{\s}{\mathbf{s}} 
\newcommand{\bth}{\bm{\theta}}
\DeclareBoldMathCommand \bphi {\bm{\phi}}
\DeclareBoldMathCommand \be {\bm{\eta}}
\DeclarePairedDelimiter\norm{\lVert}{\rVert}
\DeclareMathOperator*{\argmax}{arg\,max}
\newcommand{\removelatexerror}{\let\@latex@error\@gobble}
\icmltitlerunning{Planning with Expectation Models for Control}
\begin{document}

\twocolumn[
\icmltitle{Planning with Expectation Models for Control}



\icmlsetsymbol{equal}{*}

\begin{icmlauthorlist}
\icmlauthor{Katya Kudashkina}{guelph}
\icmlauthor{Yi Wan}{alberta,amii}
\icmlauthor{Abhishek Naik}{alberta,amii}
\icmlauthor{Richard S. Sutton}{alberta,amii}
\end{icmlauthorlist}

\icmlaffiliation{guelph}{Department of Engineering, University of Guelph, Guelph, Canada}
\icmlaffiliation{alberta}{Department of Computing Science, University of Alberta, Edmonton, Canada}
\icmlaffiliation{amii}{Alberta Machine Intelligence Institute, Edmonton, Canada}

\icmlcorrespondingauthor{Katya Kudashkina}{ekudashk@uoguelph.ca}

\icmlkeywords{model-based reinforcement learning, stochastic environment, control problem, non-Markov, expectation models}

\vskip 0.3in
]



\printAffiliationsAndNotice{}  

\begin{abstract}
In model-based reinforcement learning (MBRL), Wan et al.\ (2019) showed conditions under which the environment model could produce the \emph{expectation} of the next feature vector rather than the full distribution, or a sample thereof, with no loss in planning performance. Such~\emph{expectation models} are of interest when the environment is stochastic and non-stationary, and the model is approximate, such as when it is learned using function approximation. In these cases a full distribution model may be impractical and a sample model may be either more expensive computationally or of high variance. Wan et al.\ considered only planning for prediction to evaluate a fixed policy.
In this paper, we treat the control case---planning to improve and find a good approximate policy. We prove that planning with an expectation model must update a~\emph{state}-value function, not an \emph{action}-value function as previously suggested (e.g., Sorg \& Singh, 2010). This opens the question of how planning influences action selections. We consider three strategies for this and present general MBRL algorithms for each. We identify the strengths and weaknesses of these algorithms in computational experiments. Our algorithms and experiments are the first to treat MBRL with expectation models in a general setting.
\end{abstract}

Methods that scale with computation are most likely to stand the test of time. We refer to scaling with computation in the context of artificial intelligence (AI) as using more computation to provide a better \textit{approximate answer}. This is in contrast to with the common notion of scaling in computer science as using more computation for solving a bigger problem~\emph{exactly}. The recent success of modern machine learning techniques, in particular that of deep learning, is primarily due to their ability to leverage the ever-increasing computational power (driven by Moore's Law), as well as their generality in terms of dependence on data rather than hand-crafted features or rule-based techniques. Key to building general-purpose AI systems would be methods that scale with computation (Sutton, 2019).

RL is on its way to fully embrace scaling with computation---it already embraces scaling with computation in many aspects, for example by leveraging search techniques (e.g., Monte-Carlo Tree Search, see Browne et al., 2012; Finnsson \& Bj{\"o}rnsson, 2008), 
and modern deep learning techniques such as artificial neural networks.
Methods that resort to approximating functions rather than learning them exactly have not been fully investigated. Extending the techniques used in the simpler \textit{tabular} regime to this \textit{function approximation} regime is an obvious first step, but some of the ideas that have served us well in the past might actually be impeding progress on the new problem of interest. For example, Sutton~\& Barto (2018) showed that when dealing with feature vectors rather than underlying states, the common Bellman error objective is not learnable with any amount of experiential data. Recently, Naik et al.~(2019) showed that discounting is incompatible with function approximation in the case of continuing control tasks. Understanding function approximation in RL is key to building general-purpose intelligent systems that can learn to solve many tasks of arbitrary complexity in the real world.

Many of the methods in RL focus on
approximating value functions for a given fixed policy---referred to as the~\emph{prediction problem}
(e.g., Sutton et al., 1988; Singh et al., 1995; Wan et al., 2019).
The more challenging problem of learning the best behavior 
is known as the \textit{control problem}---that is approximating optimal policies and optimal value functions.
In the control problem, an RL agent learns 
within one of two broad frameworks: model-free RL and model-based RL (MBRL). In model-free RL, the agent relies solely on its observations to make decisions
(Sutton \& Barto, 2018).
In model-based RL, the agent has a~\textit{model} of the world, which it uses in conjunction with its observations to~\textit{plan} its decisions.
The process of taking a model as input and producing or
improving a policy for interacting with a modeled environment
is referred to as~\textit{planning}.
Models and planning are helpful.
One advantage of models and planning is that they are useful when the agent faces unfamiliar or novel situations---when the agent may have to consider states and actions that it has not experienced or seen before.
Planning can help the agent evaluate possible actions by rolling out hypothetical scenarios according to the model and then computing their expected future outcomes 
(Doll et al., 2012; Ha \& Schmidhuber, 2018; Sutton \& Barto, 2018).
Planning with function approximation remains a challenge in reinforcement learning today (Shariff and Szepesv{\'a}ri, 2020).
Planning can be performed with various kinds of models: distribution, sample, and expectation. Wan et al.~(2019) considered planning with an expectation model for the prediction problem within the function approximation setting. 

In this paper we extend Wan et al.'s (2019) work on the prediction problem to the more challenging control problem, 
in the context of stochastic and non-stationary environments.
This will involve several important definitions.
We start off with discussing important choices in MBRL
(Section~\ref{sec:MBRL_choices}).
This is followed by fundamentals on planning with expectation models in the {general context of function approximation}
(Section~\ref{sec:foundations}).
We then show (in Section~\ref{sec:planning_w_action_values}) that planning with an expectation model for control in stochastic non-stationary environments must update a state-value function and not an action-value function as previously suggested (e.g., Sorg \& Singh, 2010). Finally, we consider three ways in which actions can be selected when planning with state-value functions, and identify their relative strengths and weaknesses (Sections \ref{sec:3_algorithms}~\&~\ref{sec:illustrations_3_methods}).

\section{Choices in Model-Based RL}
\label{sec:MBRL_choices}
Model-based methods are an important part of reinforcement learning’s claim to provide a full account of intelligence. An intelligent agent should be able to model its environment and use that model flexibly and efficiently to plan its behavior.
In MBRL, models add knowledge to the agent in a way that policies and value functions do not (van Hasselt et al.,
2019).
Typically, a model receives a state and an action as inputs, and computes
the next state and reward (Kuvayev \& Sutton, 1996; Sutton et al., 2008; Hester \& Stone, 2011).
This output is used in planning to further improve policies and value functions.
In this section we discuss three important choices one needs to make in MBRL.

\textbf{Learned models~vs.~Experience Replay.}~One choice to make is where planning improvements could come from:~\emph{learned models} or~\emph{experience replay} (ER) (Lin, 1992).
In replay-based methods the agent plans using experience stored in the agent's memory.
Some replay-based examples include deep Q-networks (DQN)
(Mnih et al.~2013; 2015) and its variations: 
double-DQN (van Hasselt et al., 2016); 
extension of DQN to include prioritized sweeping 
(Schaul et al.\ 2016);
deep deterministic policy gradient (Lillicrap et al., 2016); 
rainbow DQN (Hessel et al., 2018).

MBRL methods in which 1) the model is parameterized by some learnable weights,
2) the agent learns the parameters of the model, and 
3) the agent then uses the model to plan an improved policy, 
are referred to as planning with~\emph{learned parametric models}. 
Learned parametric models are used in the Dyna architecture (Sutton, 1991),
in normalized advantage functions method 
that incorporates the learned model into the Q-learning algorithm based
on imagination rollouts (Gu et al., 2016), and
in MuZero (Schrittwieser et al., 2019). The latter is a combination of a replay-based method and a learned model: the model is trained by using trajectories that are sampled from the replay buffer.

If the environment is non-stationary, then the transitions stored in the replay buffer might be stale and can slow down or even hinder the learning progress.
In this work, we focus on the learned parametric models.

\textbf{Types of Learned Models.}
Another important choice to make in MBRL is a choice of a model type.
A model enables an agent to predict what would happen if actions are executed from states prior to actually executing them and without necessarily being in those states. 
Given a state and action, the model can predict a sample, an expectation, or a distribution of outcomes,
which results in three model-type possibilities.

The first possibility is when a model computes a probability $p$ of the next state as a result of the action taken by the agent. We refer to such model as a~\emph{distribution model}.
Such models have been used typically with an assumption of
a particular kind of distribution such as Gaussian (e.g., Chua et al., 2018).
For example, Deisenroth \& Rasmussen (2011) proposed a model-based policy search method based on probabilistic inference for learning control 
where a distribution model is learned using Gaussian processes.
Learning a distribution can be challenging:
1) distributions are potentially large objects (Wan et al., 2019); and
2) distribution models may require an efficient way
of representing and computing them,
and both of the tasks can be challenging (Kurutach et al., 2018).

The second possibility is for the model to compute a sample of the next state
rather than computing the full distribution.
We refer to such model as a~\emph{sample model}.
The output of sample models is more compact than the output of distribution models and
thus, more computationally feasible.
In this sense sample models are similar to experience replay.
Sample models have been a good solution to when a deterministic environment appears stochastic or non-stationary because of function approximation in the model.
Feinberg et al.~(2018) used a sample model to improve value estimates in a method called model-based value estimation. 
Another example is simulated policy learning (Kaiser et al., 2019) 
in which a variational autoencoder (Kingma~\& Welling, 2014) is used to model the stochasticity of the environment.
A disadvantage of a sample model is an additional branching factor in planning,
as multiple samples need to be drawn to gain a representative prediction.

The third possibility is to have a model that produces an expectation of the next state instead of the probability $p$ as in distribution models.
We refer to this kind of models as~\emph{expectation models}.
Expectation models have been an obvious choice for deterministic environments (Oh et al.,\ 2015; Leibfried et al.,\ 2017; Kurutach et al.,\ 2018). Wan et al.\ (2019), Sutton et al.\ (2008),
and Parr et al.\ (2008) all used linear expectation models.
Wan et al.\ (2019) showed that expectation models can also be used for planning in stochastic environments
without loss in planning performance when using a linear value function.

In this work, we choose to focus on expectation models parameterized by learnable weights. 
Expectation models in control settings with function approximation have been previously explored.
Sorg \& Singh (2010) proposed model-based planning with an expectation model and 
illustrated the use of linear-option expectation models compared to primitive-action linear
expectation models on the continuous rooms world domain.
Jafferjee (2020) evaluated the imperfection of Dyna-style algorithms in the context of function approximation and learned models.
Buckman et al.\ (2018) 
used an ensemble of expectation models and action-value functions.

\textbf{Episodic vs. Continuing problems}
Another choice is that of the problem setting. Sequential decision making problems are typically episodic or continuing. Control with function approximation in continuing problems is arguably the problem setting that matters for AI (Naik et al., 2019; Sutton \& Barto, 2018: Chapter 10), but studying episodic problems is a natural stepping-stone towards the problem of AI. Hence we first study planning for control with function approximation in the episodic setting, extending Wan et al.'s (2019) work on the prediction problem.
The episodic formulation involves termination of episodes, and thus requires an explicit indication of termination in our definitions, which are laid out in the following section.

\section{Foundations of Planning with Function Approximation}
\label{sec:foundations}
Consider a dynamic system
that takes as input at timestep $t$ an action $A_t$
from a discrete set of actions $\Actions$
and emits observations $O_t \in \Obs$.
This is the~\emph{environment}.
The sequence of actions and observations is referred to as~\emph{history} and denoted with $H$.
The history at timestep $t$ is: $H_t \doteq A_0, O_1, A_1, O_2, ..., A_{t-1}, O_t$. A random timestep at which the episode terminates is denoted by $T$.
The agent also receives a scalar numerical reward at every timestep that is a function of the observation: $R_t \doteq f( O_t) \in \mathbb{R}$.
An agent-environment interaction---an \emph{episode}---is then as follows: 
$A_0, O_1, R_1, ..., A_T, O_\bot, R_T$ 
where $O_\bot$ is a special observation at the terminal state.
The history is what an agent interacting with the environment can know and observe.
The agent chooses actions to maximize the expected \emph{discounted return} from the start state. At timestep $t$, the expected return discounted by factor $\gamma$ is denoted by $G_t$:
\fontsize{9}{6}
\begin{align}
  G_t \doteq \displaystyle\sum\limits_{k = t}^{T - 1} \gamma^{k-t} R_{k+1}, \text{ where } 0 \leq \gamma < 1.  
\end{align}
\normalsize

\textbf{States.} 
An important notion in function approximation is that of the~\emph{agent state}.
The agent state is different from the environment's state.
The environment's state is all the things used by the environment to compute itself.
It may include latent states that are not visible to the agent.
The agent state is an approximation of possible environment states.
The agent state at time $t$ is a compact summary of the agent's history up to $t$ (including past actions, observations, and rewards) that is useful for predicting and controlling the future of the trajectory.
In the function approximation setting, the agent state is represented by a feature vector of size $d$ 
denoted by $\s$, $\s \in \mathbb{R}^{d}$, that is computed using a~\emph{state-update} function $u$ (Sutton \& Barto, 2018).
The state-update function uses the most recent observation and action along with the most recent agent state to recursively compute the new agent state: 
$\s_t \doteq u (\s_{t-1}, A_{t-1}, O_t)$.
The state-update function $u$ may be constructed based on prior knowledge (e.g., Albus, 1971; 1981) or learned (e.g., Littman \& Sutton, 2001).
The agent state $\s$ is an approximation of an environment state and may not be Markov,
that is Pr$(\s_{t+1} |\,\s_t)$ may not be equal to Pr$(\s_{t+1} |\,H_t)$. Unless otherwise specified, henceforth we refer to the agent state just by~\emph{state}.
 
\textbf{Value functions.}
In the tabular RL setting, we define~\textit{true} value functions w.r.t.\ environment states. For instance, for a policy $\pi$, the true value function $v_{\pi}$ is the expected return given the environment state (which is a Markov state).
But in the function approximation setting, the agent does not have access to the environment state, nor is the agent state necessarily Markov. Hence the true value functions $v_{\pi}, v_*, q_{\pi}, q_*$ are undefined w.r.t.\ agent states.
Instead, we have approximate value functions $\hat v_{\pi}, \hat {\pi}$ parameterized by learnable weights (e.g., $\hat v(\s, \bw), \bw \in \mathbb{R}^{d})$.

In this work we focus on linear value functions.
One may think that this choice is limiting in the expressivity of the value function.
We argue that the state-update function $u$ can encompass the necessary expressivity of the features. 
In many cases, we can choose to have a
linear value function by changing a feature representation with $u$ function.
For example, to achieve that, a choice for a state-update function $u$ can be
variational autoencoders (e.g., Kingma \& Welling, 2014; Rezende et al., 2014), 
long short-term memory networks (Hochreiter \& Schmidhuber, 1997), 
predictive state representation (Littman \& Sutton, 2001),
temporal difference networks (Sutton et al., 2005), 
or generalized value functions (White, 2015).
Choices of linear representations have a long history
in the field of reinforcement learning and form the basis of many methods with
strong theoretical guarantees (Sorg \& Singh, 2010).

\textbf{Value Iteration}.
A known approach to learning a policy is iterative updates
to a state-value or action-value function (Sutton, 1988).
Various solutions exist for updating value functions.
A classic planning algorithm is \emph{value iteration} (Bellman, 1957; Sutton,
1988; Watkins, 1989; Bertsekas, 2012) which performs sweeps of computed backup values through the state or state-action space.

\textbf{Models.}
In the tabular setting, a model takes an environment state and action as input, and produces a sample or a distribution of the next environment state and reward.
In the function approximation setting, while the agent state is not Markov w.r.t.\ the environment, it is Markov w.r.t.\ the model. This means we can define models that use agent state instead of the environment state, and operate in the same way. Such models can be learned from experience, in other words, models can be parameterized by weights that can be learned.

\textbf{Definition.} A \emph{distribution model} consists of
an~\emph{expected reward function} $r:
\mathbb{R}^d \times \Actions \rightarrow \mathbb{R}$, 
written $r(\s,a)$, 
and a~\emph{transition probability function}
$p: \mathbb{R}^d \times \mathbb{R}^d \times \Actions \rightarrow [0, 1]$, 
written $p(\s'\mid \s, a)$, where:
\fontsize{9}{6}
    \begin{align}
                  \int\displaylimits_{\s' \in \mathbb{R}^{d}} p(\s' \mid \s, a) d\s' \leq 1\quad
      \forall\ \s \in \mathbb{R}^d, a \in \Actions.
      \label{eq:transition_part_of_distr_model}
\end{align}
\normalsize
If $p$ integrates to less than 1, then the remaining probability is interpreted as the probability of termination.

\textbf{Definition.} \emph{Approximate value iteration (AVI) with a distribution model} consists of repeated applications of the 
the following update:
\fontsize{9}{6}
\begin{align} 
    \bw \leftarrow \bw + 
        \alpha\big( g(\s,\bw) - \hat v(\s,\bw)\big) \nabla_{\bw} \hat v(\s,\bw),
        \label{eq:AVI_distr_model_weights}
\end{align} 
\normalsize
where $g(\s, \bw)$, the \emph{update target}, is 
\fontsize{8.5}{6}
    \begin{align} 
      g(\s, \bw) \doteq \max_{a \in \Actions} 
                \bigg[r(\s,a) + 
                    \gamma \int\displaylimits_{\s' \in \mathbb{R}^{d}} p(\s' \mid \s, a) \hat v(\s',\bw) d\s'\bigg].
    \label{eq:target_state_value}
    \end{align}
\normalsize

\textbf{Definition.}
A~\emph{general episodic expectation model (GEEM)} consists of an expected reward function
$r: \mathbb{R} \times \Actions \rightarrow [-\infty , 0]$, an expected next-state function $\hat \s: \mathbb{R}^d \times \Actions \rightarrow \mathbb{R}^d$, and a termination probability function
$\beta : \mathbb{R}^d \times \Actions \rightarrow [0, 1]$.
$\beta(\s, a)$ is interpreted as the probability of terminating in one step from state $s$ if action $a$ is taken, and
$\hat \s(\s, a)$ is interpreted as the expected next state if termination does not occur. 

\textbf{Definition.} \emph{Expectation value iteration (EVI)} consists of repeated application of~(\ref{eq:AVI_distr_model_weights}) with  the update target
\fontsize{8.5}{6}
\begin{align}
    g(\s, \bw)  = \max_{a \in \Actions} \bigg
    [ r(\s,a) + \gamma  \big(1 - \beta (\s,a)\big) \hat v\big(\hat \s(\s,a), \bw\big) \bigg].
    \label{eq:compute_target}
\end{align}
\normalsize

In the special case in which the approximate value function is linear, $\hat v(\s ,\bw) = \bw^{\top}\s$, 
this update target can be written
\fontsize{8.5}{6}
\begin{align}
    g(\s, \bw)  = \max_{a \in \Actions} \bigg
    [ r(\s,a) + \gamma  
    \underbrace{\big(1 - \beta (\s,a)\big)  \hat \s (\s, a)}_{\bar \s(\s,a)} {}^\top \bw \big) \bigg], 
    \label{eq:EVI_special_case}
\end{align}
\normalsize
revealing that the $\beta$ and $\hat\s$ functions of the expectation model can be combined and learned as a single function $\bar \s$.
This motivates the next definition.

\textbf{Definition.} 
A \emph{zero-terminal expectation model (ZTEM)} consists of 
an expected reward function
$r: \mathbb{R} \times \Actions \rightarrow [-\infty , 0]$ and a zero-terminal expected next-state function $\bar \s: \mathbb{R}^d \times \Actions \rightarrow \mathbb{R}^d$, written $\bar \s (\s,a)$, 
which can be interpreted as an expectation of the next state given that the terminal state is treated as a zero vector.

\textbf{Definition.} A ZTEM and a GEEM are \emph{aligned} iff their expected reward functions are identical and the expected next state functions are related by
\begin{align}
    \bar \s (\s, a) = \big(1 - \beta(\s, a)\big) \hat \s(\s, a). 
    \label{eq:ZTEM=GEEM}
\end{align}

\textbf{Definition.} \emph{Linear expectation value iteration (LEVI)} is the combination of EVI and a linear state-value function.
It consists of repeated applications of
~(\ref{eq:AVI_distr_model_weights})
with the update target
\begin{align}
    g(\s, \bw)  = \max_{a \in \Actions} \big
    [ r(\s,a) + \gamma \bw^\top \bar{\s}(\s,a) \big],
    \label{eq:LEVI_update_target}
\end{align}

\begin{thm}
If a ZTEM and GEEM are aligned, 
then LEVI is equivalent to EVI with a linear state-value function.
\end{thm}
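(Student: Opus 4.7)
The plan is a direct substitution argument. Both EVI and LEVI consist of repeated applications of the same outer update rule~(\ref{eq:AVI_distr_model_weights}); only their update targets differ. So to prove the algorithms are equivalent under the hypotheses, it suffices to show that the EVI target~(\ref{eq:compute_target}) specialized to a linear value function $\hat v(\s',\bw) = \bw^\top \s'$ coincides with the LEVI target~(\ref{eq:LEVI_update_target}) pointwise in $(\s, \bw)$.

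First, I would substitute $\hat v(\s',\bw) = \bw^\top \s'$ into~(\ref{eq:compute_target}), yielding
\[
g(\s,\bw) = \max_{a \in \Actions} \big[\, r(\s,a) + \gamma\,(1-\beta(\s,a))\, \bw^\top \hat\s(\s,a) \,\big].
\]
Next, since $1-\beta(\s,a)$ is a scalar and $\bw^\top(\cdot)$ is linear, I would absorb that scalar into the vector and rewrite the inner product as $\bw^\top\!\big((1-\beta(\s,a))\,\hat\s(\s,a)\big)$. Invoking the alignment condition~(\ref{eq:ZTEM=GEEM}), the bracketed vector collapses to $\bar\s(\s,a)$. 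Alignment also gives that the expected reward functions $r$ of the GEEM and ZTEM coincide, so the whole expression becomes $\max_a[r(\s,a) + \gamma\,\bw^\top \bar\s(\s,a)]$, which is exactly the LEVI target~(\ref{eq:LEVI_update_target}).

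Because the update targets agree at every $(\s,\bw)$, the sequences of weight updates produced by LEVI and by EVI with a linear state-value function are identical on every sweep, which gives the desired equivalence. There is no real obstacle in this argument: the content of the theorem is essentially a bookkeeping observation, already foreshadowed by the rewriting in~(\ref{eq:EVI_special_case}), that the factor $(1-\beta(\s,a))\hat\s(\s,a)$ appearing inside EVI is exactly what the ZTEM calls $\bar\s(\s,a)$. The theorem records that the two ways of parameterizing this factor---explicitly through a GEEM's $\beta$ and $\hat\s$, or implicitly through a ZTEM's $\bar\s$---yield the same planning algorithm whenever the value function is linear.
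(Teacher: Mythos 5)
Your argument is correct and follows essentially the same route as the paper: substituting the linear value function into the EVI target is exactly the rewriting already displayed in~(\ref{eq:EVI_special_case}), and invoking the alignment condition~(\ref{eq:ZTEM=GEEM}) to identify $(1-\beta(\s,a))\hat\s(\s,a)$ with $\bar\s(\s,a)$ is precisely how the paper concludes. You simply spell out the substitution and the pointwise equality of update targets that the paper's one-line proof leaves implicit.
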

\begin{proof}
Follows immediately from the combination of~(\ref{eq:EVI_special_case}) and~(\ref{eq:ZTEM=GEEM}).
\end{proof}

\textbf{Definition.} A \emph{ZTEM and a distribution model are aligned} iff their expected reward functions are identical and 
their transition parts are related by
\fontsize{8.5}{6}
    \begin{align}
      \bar \s (\s,a) = \int\displaylimits_{\s' \in \mathbb{R}^{d}} \s' p(\s' \mid \s, a) d\s'\quad
      \forall\ \s \in \mathbb{R}^d, a \in \Actions,
      \label{eq:exp_model_transition}
    \end{align}
\normalsize

The theorem below extends Wan et al.'s (2019) result to our setting and definitions.
\begin{thm}
If a ZTEM and a distribution model are aligned, 
then LEVI is equivalent to AVI with a linear state-value function.
\end{thm}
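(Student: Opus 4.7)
The plan is to show that when the value function is linear, the AVI update target in equation~(\ref{eq:target_state_value}) reduces, term by term, to the LEVI update target in equation~(\ref{eq:LEVI_update_target}), using the alignment conditions. Since both AVI and LEVI apply the same weight update~(\ref{eq:AVI_distr_model_weights}) with their respective targets, matching the targets suffices to conclude equivalence.

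First I would fix an arbitrary state $\s$ and action $a$ and compare the two quantities inside the $\max_a$. By alignment, the expected reward functions of the ZTEM and the distribution model are identical, so the $r(\s,a)$ term matches trivially. The remaining work is to show that
\[
\int_{\s' \in \mathbb{R}^d} p(\s' \mid \s, a)\, \hat v(\s',\bw)\, d\s' \;=\; \bw^{\!\top} \bar{\s}(\s,a).
\]
Using linearity, $\hat v(\s',\bw) = \bw^{\!\top}\s'$, and by linearity of the integral I would pull $\bw^{\!\top}$ outside to obtain
\[
\int_{\s'} p(\s' \mid \s, a)\, \bw^{\!\top}\s'\, d\s' \;=\; \bw^{\!\top}\!\int_{\s'} \s'\, p(\s' \mid \s, a)\, d\s'.
\]
The alignment condition~(\ref{eq:exp_model_transition}) then identifies the right-hand integral with $\bar{\s}(\s,a)$, yielding exactly the LEVI inner expression.

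Next, since the $\max_{a \in \Actions}$ is applied to identical per-action expressions, the two update targets coincide pointwise in $\s$ and $\bw$. Substituting this common target into~(\ref{eq:AVI_distr_model_weights}) gives identical weight updates for AVI and LEVI, proving equivalence.

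There is no real obstacle here: the argument is a direct calculation resting on linearity of expectation and the alignment definition. The only subtlety worth flagging is the implicit handling of termination. The distribution model allows $p(\cdot\mid\s,a)$ to integrate to less than one, with the missing mass interpreted as termination probability; the ZTEM absorbs this by treating the terminal state as the zero vector, so the contribution of terminating transitions to $\bw^{\!\top}\s'$ is zero and drops out of both sides. This is precisely what makes the identification~(\ref{eq:exp_model_transition}) consistent, and it is worth a single sentence in the proof to make explicit.
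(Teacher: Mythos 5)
Your proposal is correct and follows essentially the same route as the paper's proof: pull $\bw^{\top}$ out of the integral by linearity and identify the remaining integral with $\bar{\s}(\s,a)$ via the alignment condition~(\ref{eq:exp_model_transition}), so the AVI and LEVI targets coincide and the updates are identical. Your added remarks on the matching reward terms and on termination being absorbed by the zero-vector convention are consistent with the paper and only make the argument more explicit.
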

\begin{proof}
Consider now the integral part in~(\ref{eq:target_state_value}) below:
\fontsize{8.5}{6}
\begin{align} 
        \int\displaylimits_{\s' \in \mathbb{R}^{d}} p(\s'| \s, a) \hat v (\s', \bw) d\s'
        & = \int\displaylimits_{\s' \in \mathbb{R}^{d}} p (\s' \mid \s, a ) \bw^\top \s' d\s' \\
        & = \int\displaylimits_{\s' \in \mathbb{R}^{d}} p (\s' \mid \s, a) \s'^\top \bw d\s' \nonumber\\
        & =\bigg( \int\displaylimits_{\s' \in \mathbb{R}^{d}} p (\s' \mid \s, a ) \s' d\s'\bigg )^\top \bw\\
        & = \mathbb{E}_{\be} \big[ \s_{t+1} \mid \s_t = \s, A_t = a\big] ^\top \bw \\
        & \doteq \bar \s(\s, a )^\top \bw
\end{align}
\end{proof}
\normalsize
This shows that~(\ref{eq:target_state_value}) and (\ref{eq:LEVI_update_target}) are equivalent.
The theorem allows us to use the benefits of expectation models such as lesser computation and smaller variance. 

\textbf{Planning and backup distribution.}
Planning can often be described as proceeding in a sequence of \emph{state-value backups}, each of which updates the value estimate of a single state (and perhaps others by generalization).
That state, called the~\emph{backup state}, can be selected in many different ways.
Often we can speak of the distribution of states at which backups are done.
A state distribution from which backup updates are performed during planning
has an effect on the value function convergence.
Here we refer to a distribution of state we are planning from as a~\emph{backup distribution}.
In general, we talk of backup control which is analogous to ``search control'' strategy such as prioritized
sweeping (see Sutton et al., 2008; Pan et al., 2018).
To update the backup state, one or more backups are made either from it or from its possible successor states, a backup target is formed, and finally the backup-state's state-value estimate is shifted toward the target.
In one-step backups, all the backup updates are from the same backup state;
they all predict one step into the future from it.
These are the shortest backups. In~\emph{multi-steps}, or \emph{iterated}, backups,
backups are made both from the backup state and the predicted states of those backups. 

\section{Planning with Expectation Models Can Not Be Done with Action-value Functions}
\label{sec:planning_w_action_values}
It is natural to extend AVI to an action-value form.
Here we demonstrate that
planning with expectation models with function approximation and linear value functions must update state-value functions and not \emph{action-value} functions.
An~\emph{update target} for an~\emph{approximate action-value iteration} (AAVI)
is defined as $g(\s, a, \bw)$:
\fontsize{8}{6}
    \begin{align} 
      g(\s, a, \bw) \doteq r(\s,a) + 
                    \gamma \int\displaylimits_{\s' \in \mathbb{R}^{d}} p(\s' \mid \s, a) 
                    \max_{a'} \hat q(\s',a', \bw) d\s'
    \label{eq:target_action_value}
    \end{align}
\normalsize
AAVI update is then: 
\fontsize{8.5}{6}
\begin{align} 
    \bw \leftarrow \bw + 
        \alpha\big( g(\s, a, \bw) - \hat q(\s, a,\bw)\big) \nabla_{\bw} 
                                                        \hat q(\s, a, \bw).
\end{align} 
\normalsize

In stochastic environments, 
planning with expectation models with function approximation and linear value functions must update state-value functions and not action-value functions.
To update an action-value function, we need the second part of the target in \eqref{eq:target_action_value}, that is as follows:
\fontsize{8.5}{6}
\begin{align} 
        \int\displaylimits_{\s' \in \mathbb{R}^{d}} p(\s'|\s, a) \max_{a'}\hat q(\s', a')\,  d\s' &\doteq \nonumber
    \int\displaylimits_{\s' \in \mathbb{R}^{d}} p(\s'|\s, a)\max_{a'}({\bw}_{\! a'}^\top \s')\, d\s'\\
    \label{eq:aavi}
    &\not=\max_{a'}\int\displaylimits_{\s' \in \mathbb{R}^{d}} p(\s'|\s, a) {\bw}_{\! a'}^\top \s' d\s'.
\end{align}
\normalsize

This result shows that Sorg \& Singh's (2010) usage of
an expectation model for planning with linear \textit{action}-value function is invalid. For 
a linear expectation model matrix ${\bf F}_{\!a}\in\mathbb{R}^d \times \mathbb{R}^d$, the second part of the target they used is $\max_{a'} ({\bf F}_{\!a}\s)^\top {\bw}_{\!a'}$, which we showed is not equivalent to an action-value value iteration update with an aligned distribution model \eqref{eq:aavi}:
\fontsize{8.5}{6}
\begin{align}
    \int\displaylimits_{\s' \in \mathbb{R}^{d}} p(\s'|\s, a)\max_{a'}\hat q(\s', a')\, d\s'
    &\not=\max_{a'} ({\bf F}_{\!a}\s)^\top {\bw}_{\!a'}.
\end{align}
\normalsize
We demonstrate that planning with expectation models cannot update action-value functions on a counterexample.

\textbf{A counterexample}.
Consider a Markov decision process (MDP) with three transition states shown in Figure~\ref{fig:monster_mdp}.
From the start state $1$ the agent can perform one of the two actions: $A$ and $B$.
\begin{figure}[h]
\begin{center}
\includegraphics[scale=0.3]{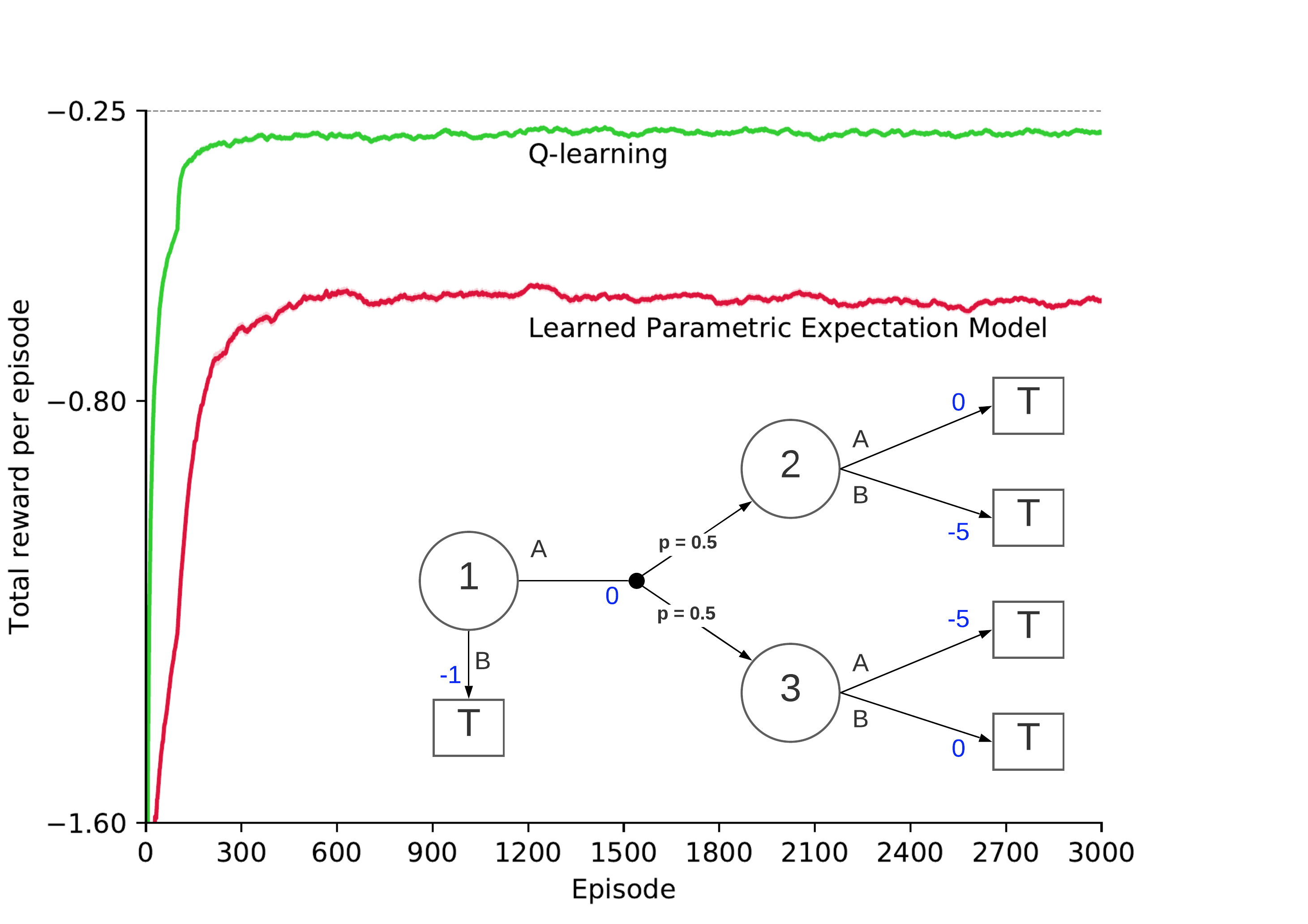}
\caption{{A counterexample episodic MDP that illustrates performance of planning with an expectation model and action values. 
}}
\label{fig:monster_mdp}
\end{center}
\end{figure}
With equal probability, the action $A$ causes the agent to advance either
to the states $2$ or $3$.
If the transition takes the agent to the $2$ state, then
action $A$ takes the agent to the terminal state with the reward $0$,
and action $B$ takes the agent to the terminal state with the reward $-5$.
The opposite happens in the $3$ state:
action $A$ takes the agent to the terminal state with the reward $-5$,
and action $B$
takes the agent to the terminal state with the reward $0$.
The episode ends once the agent reaches a terminal state.
Figure~\ref{fig:monster_mdp} empirically illustrates that expectation
models do not work with action values in
stochastic environments.
It shows one-step Q-learning (Watkins~\& Dayan, 1992) reaching optimal policy that is close to $-0.25$, while one-step Q-planning (Sutton~\& Barto, 2018: Chapter 8) with a learned expectation model and action values is not capable of that.

\textbf{A corridor domain illustration}.
We further extend the toy MDP to a more practical domain 
and illustrate
on a stochastic corridor that planning with expectation models cannot update action-value functions.

\begin{figure} [t!]
\vskip 0.2in 
\begin{center}
\includegraphics[scale=0.45]{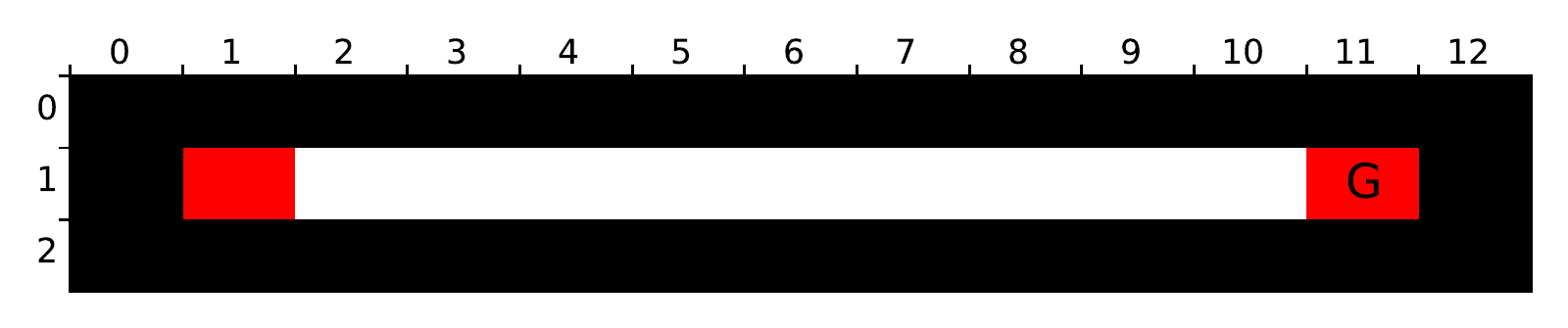}
\caption{A corridor domain.}
\label{fig:corridor}
\vskip -0.4in
\end{center}
\end{figure}

Consider a corridor domain shown on Figure~\ref{fig:corridor}.
The corridor consists of $9$ non-terminal states. 
The black cells represent walls, red cells represent terminal states. The agent can move in the white cells using one of the
\texttt{left} or \texttt{right} actions. 
The actions are stochastic: with probability 2/3, the actions cause the agent to move one cell in the direction corresponding to its name, and with probability 1/3, the agent ends up one cell in the other direction. 
An episode starts with the agent randomly initialized in one of the white states, and ends when it reaches one of two terminal states. The reward is $-1$ on all time steps.
If the agent reaches the terminal state marked by `G'---the goal state---it gets a reward of $+20$; the reward is $0$ for reaching the other terminal state.
In this illustration, we allowed a positive reward while ensuring that episodes terminate.

Figure~\ref{fig:long_run} demonstrates learning curves over $20000$ episodes.
A one-step Q-learning agent converged
to the performance of the agent planning with action-values and a true model, while planning with expectation model and action values could not reach optimal behavior.

\section{How Does Planning Affect Action Choice?}
\label{sec:3_algorithms}
The previous section showed that planning with expectation models
cannot proceed directly with action values.
In this section we discuss how the planning affects the choice of actions when expectation models are used with state-value functions.
The following algorithms use linear value functions with state
$\s$ that is a feature vector produced by a state-update function,
$r(\s,a)$  and $\bar \s$ that are computed by the model prediction.
They also use an expectation model whose parameters are learned by minimizing the squared loss as follows:
$\norm{\bar \s_{t+1} - \s_{t+1}}_2^2$ and 
$(\hat r_{t+1} - R_{t+1})^2$.
For planning updates, state $\s$ is sampled from a backup distribution.
All algorithms perform regular TD update outside of the planning updates\footnote{See supplementary materials for details.}.

\begin{figure} [t!]
\centering
\includegraphics[scale=0.5]{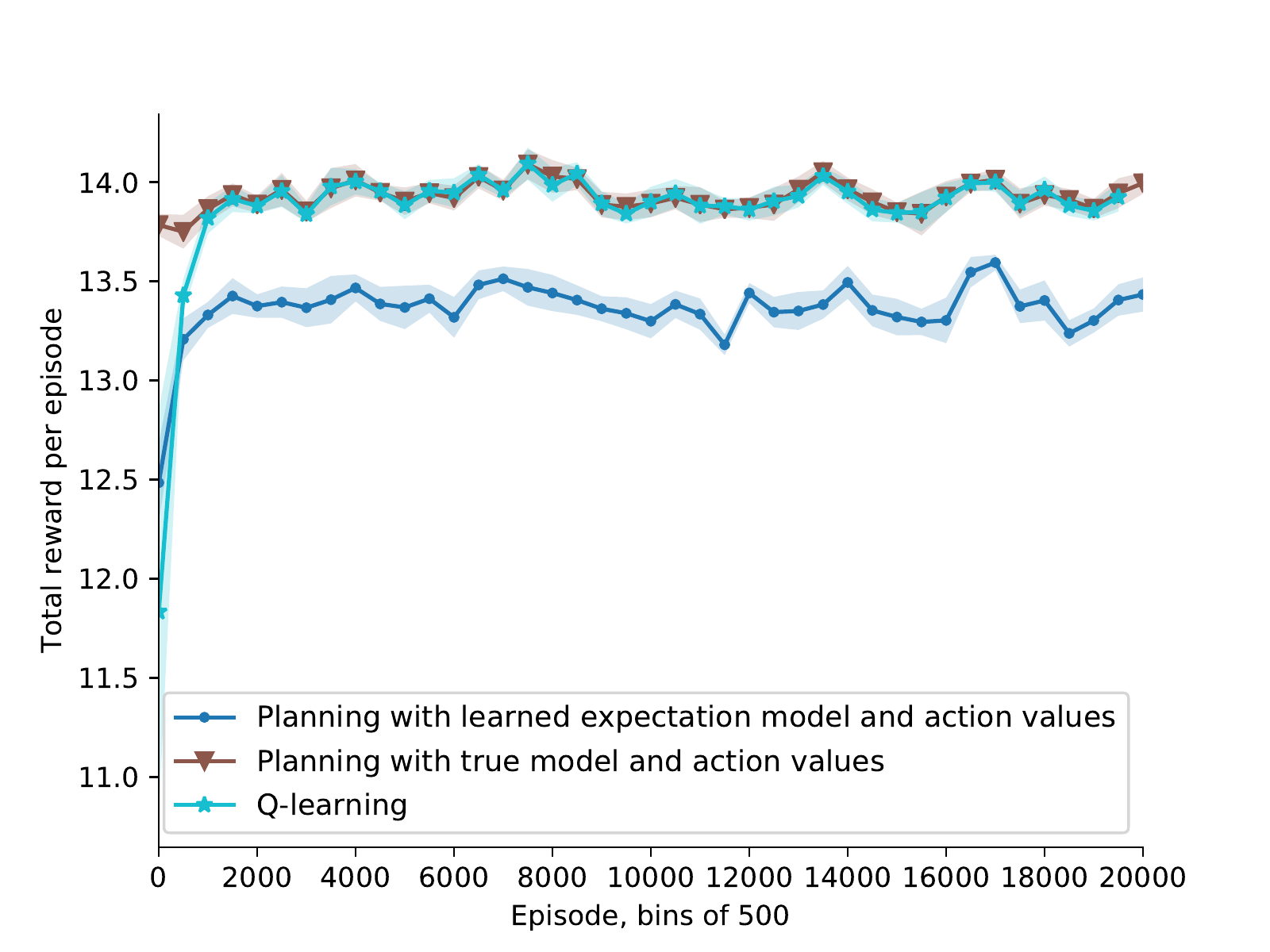}
\caption{{Q-learning and planning with expectation model and action values on the corridor stochastic environment with function approximation.
Feature-vectors size $d=14$ are random binary feature vectors, for each state. They all have the same $k$ number of $1$s, $k = 5$, picked at random, without replacement. 
Each curve is an average of 10 runs; each point represents the average number of total reward per episode, averaged over all the runs and over temporal stretches of $500$-episode bins.}}
\label{fig:long_run}
\vskip -0.2in
\end{figure}

\textbf{Algorithm 1.} One way in which planning can affect action is to compute a \emph{back-up value} with a model at the decision time for each of the actions:
\begin{equation}
   b(\s_t,a, \bw) \doteq r(\s_t,a) + \bar \s(\s_t,a)^\top\bw 
\end{equation}
\normalsize
and then act greedy w.r.t.~the model:
\begin{equation}
   A_t \doteq \argmax_{a\in \Actions} b(\s_t, a, \bw) 
\end{equation}
\normalsize
At planning time we compute the target as in (\ref{eq:compute_target})
and then use the TD update with $\s$ from the backup distribution and $\bar \s$ as an output from the model as in (\ref{eq:AVI_distr_model_weights}).
In this approach the backup values play the same role as action-values in greedy action selection
with an action-value method.
The equation above is greedy action selection with respect to the model; in practice we may want to change it to $\epsilon$-greedy action selection (in which with a small probability $\epsilon$
a random action is selected among all the actions with equal probability, independently of the value function estimates)
or to another method that ensures exploration.

\textbf{Algorithm 2.} 
Another approach to influence action with planning is to maintain action values
as cached backup values.
Planning already computes backed-up values as part of its normal operation.
Some of those values can be stored in the action-value approximator---a separate function approximator with weight vector $\bw_q$.
At decision time, actions are taken using $A_t \doteq \argmax_{a\in \Actions} \hat q(\s_t, a, \bw)$.
During planning time the same planning update as in (\ref{eq:AVI_distr_model_weights}) is performed
and in addition, action-value function parameters $\bw_q$ are updated:
\fontsize{9}{6}
\begin{align}
    \bw_q \leftarrow \bw_q + \alpha_{\bw_q} \bigg(b(\s,a, \bw) - \hat q(\s, a, \bw)\bigg) \nabla \hat q(\s, a, \bw)
    \label{eq:alg2_update}
\end{align}
\normalsize

\textbf{Algorithm 3.} 
The third approach is similar to Algorithm 2, but instead of caching into action values it caches all the way to an approximate policy.
Actions are sampled from the parameterized policy $\pi_{\bth}$ (e.g., a softmax policy) at decision time and the policy is updated as in policy gradient methods.
During planning time the same planning update as in (\ref{eq:AVI_distr_model_weights} and \ref{eq:compute_target}) are performed and the policy parameters are updated as:
\begin{align}
    &\delta(\s, \bw) \leftarrow r(\s,a) + \gamma \bar \s(\s,a)^\top\bw - \s^\top\bw \\
    &\bth \leftarrow \bth + \alpha_{\bth} \delta(\s, \bw) \nabla \big(\log \pi_{\bth}(\s,a)\big)
\end{align}

\section{Empirical Illustrations}
\label{sec:illustrations_3_methods}
This section reports some preliminary experimental results that illustrate differences between Q-learning,
planning with expectation model and action values, and the three proposed algorithms
that plan with expectation model and state values.
The goal of this section is to demonstrate empirically the proposed
approached of affecting action during planning.

\textbf{Domain}. 
In the first set of experiments, the domain used was a corridor shown in Figure \ref{fig:corridor} and described in \ref{sec:planning_w_action_values}. 
The agent's actions are stochastic:
with probability $9/10$, the actions would cause the agent to move one cell in the
corresponding direction, and with probability $1/10$, the agent would move instead in
the other direction.
In addition to stochastic actions, we introduce a non-stationarity 
to the domain inspired by van Seijen et al.~(2020): the goal location changes every $k$ episodes---\emph{a phase}---and the other terminal state becomes the goal state.

\begin{figure}[t!]
  \centering
  \subfigure[]
  {\includegraphics[scale=0.245]{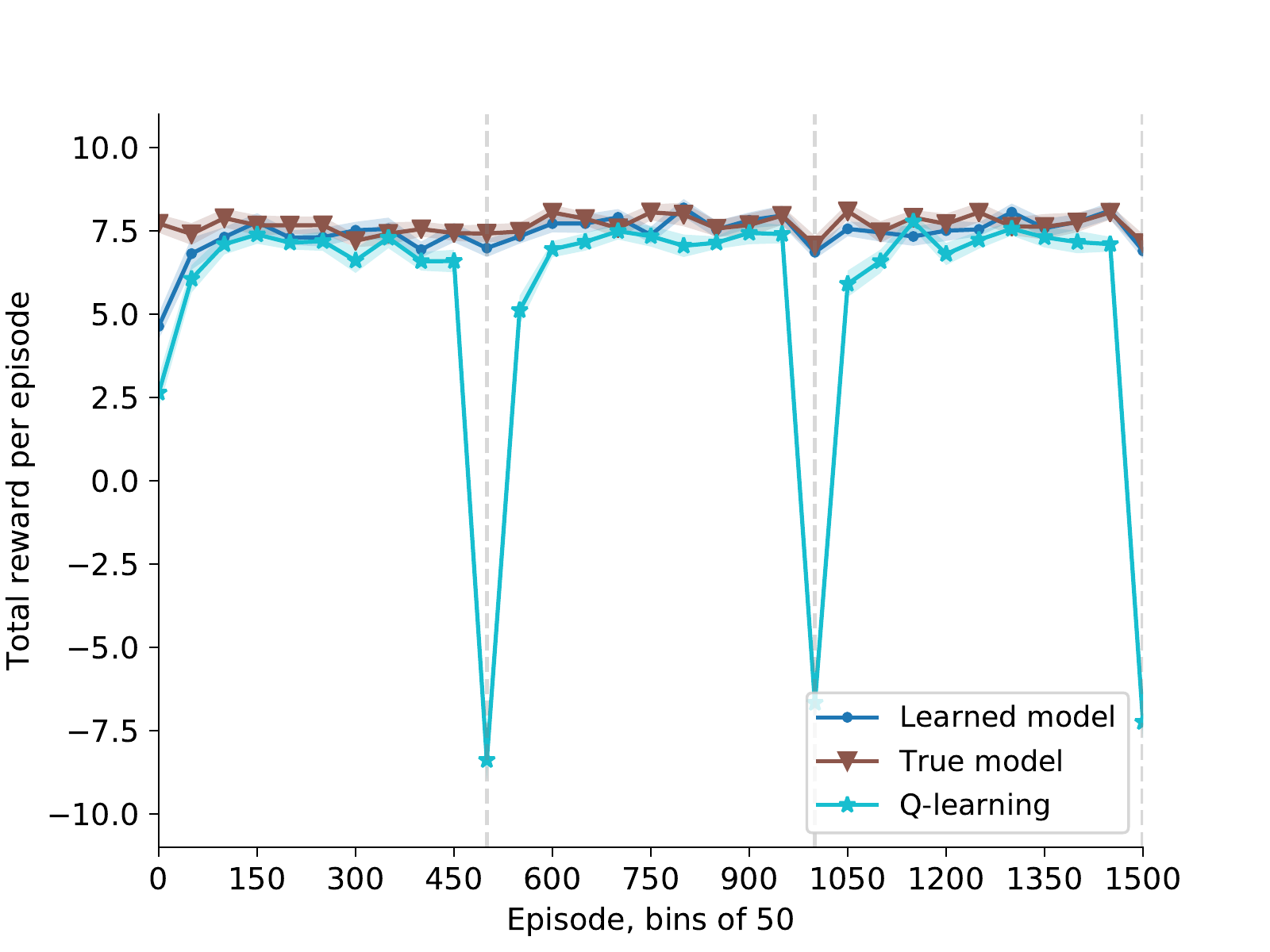}
  \label{fig:stoch_tabular}}
  \subfigure[]
  {\includegraphics[scale=0.245]{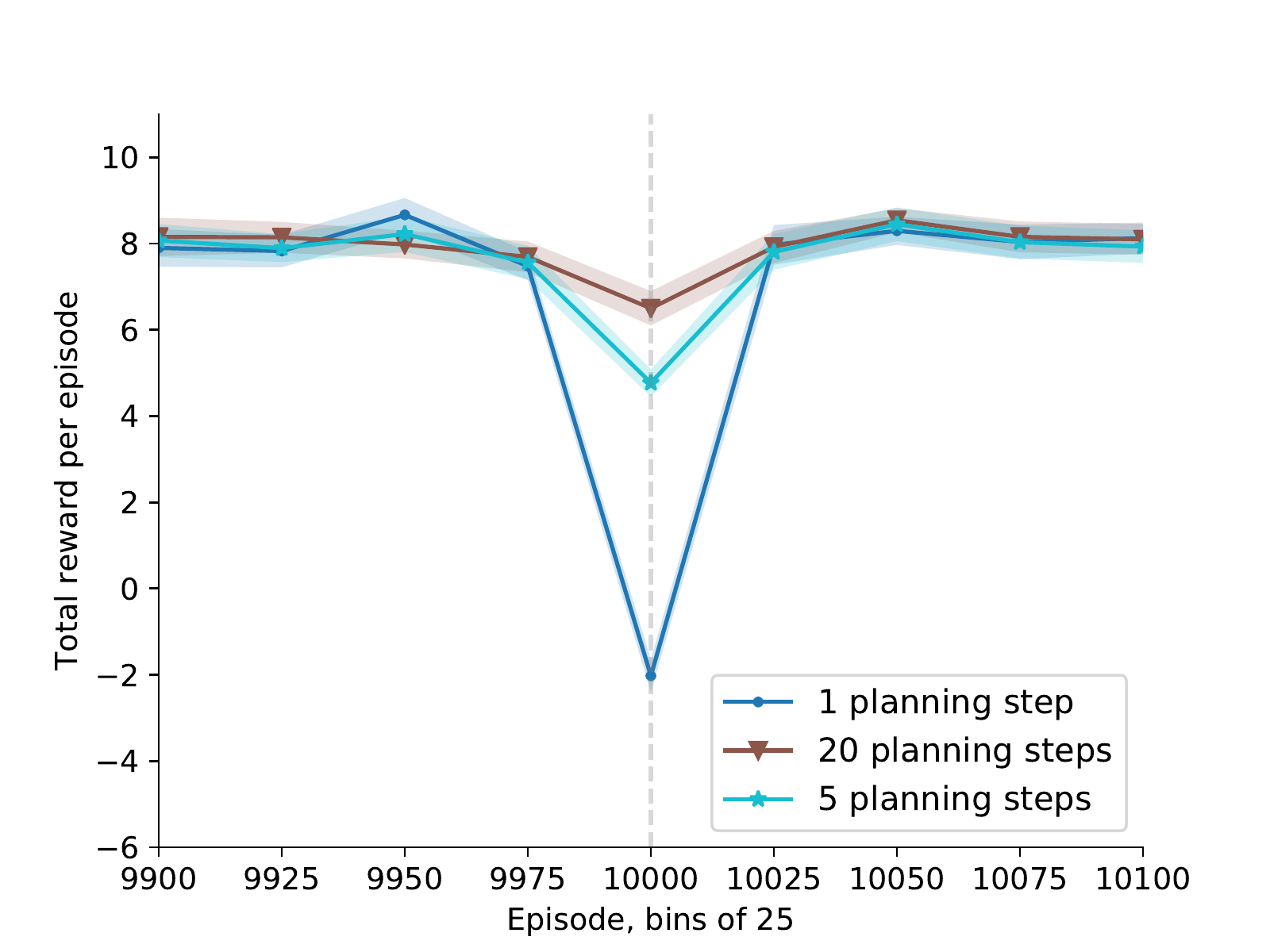}
  \label{fig:stoch_planning_steps}}
  \caption{Stochastic corridor results with tabular regime. (a) 
  show one-step Q-planning with the true environment model, a learned expectation model, and one-step Q-learning on a stochastic corridor.
  A goal switching phase is $500$ episodes. 
  (b) illustrate the effect of the number of planning steps when adjusting to the change in the environment. A goal switching phase is $10000$ episodes. 
  In (a) and (b): Each curve is an average of 30 runs; each point represents the average number of total reward per episode, averaged over all the runs and over temporal stretches of $50$-episode and $20$-episode bins for (a) and (b) respectively.}
  \label{fig:stochastic_corridor_non_stationary_impact}
  \vskip -0.2in
\end{figure}

\textbf{Illustration of non-stationarity impact.}
We first demonstrate the impact of non-stationary in fully-observable settings using the corridor domain (Figure~\ref{fig:corridor}) where the goal switches every $500$ episodes.
We compared the performance of three algorithms: (a) one-step Q-planning 
with the true environment model, (b) with a learned expectation model, (c) one-step Q-learning.
The first experiment was on a deterministic version of the corridor environment in which the actions were not stochastic. Figure \ref{fig:stochastic_corridor_non_stationary_impact} shows how the performance of Q-learning took a relatively long time to find the goal state consistently every time the goal got switched (i.e., a phase changed). The two model-based algorithms resulted in a small dip in performance when the goal switched, but they found the optimal policy relatively quickly by planning.

\begin{figure*}[t!]
\begin{center}
  \subfigure []
  {\includegraphics [scale=0.45]{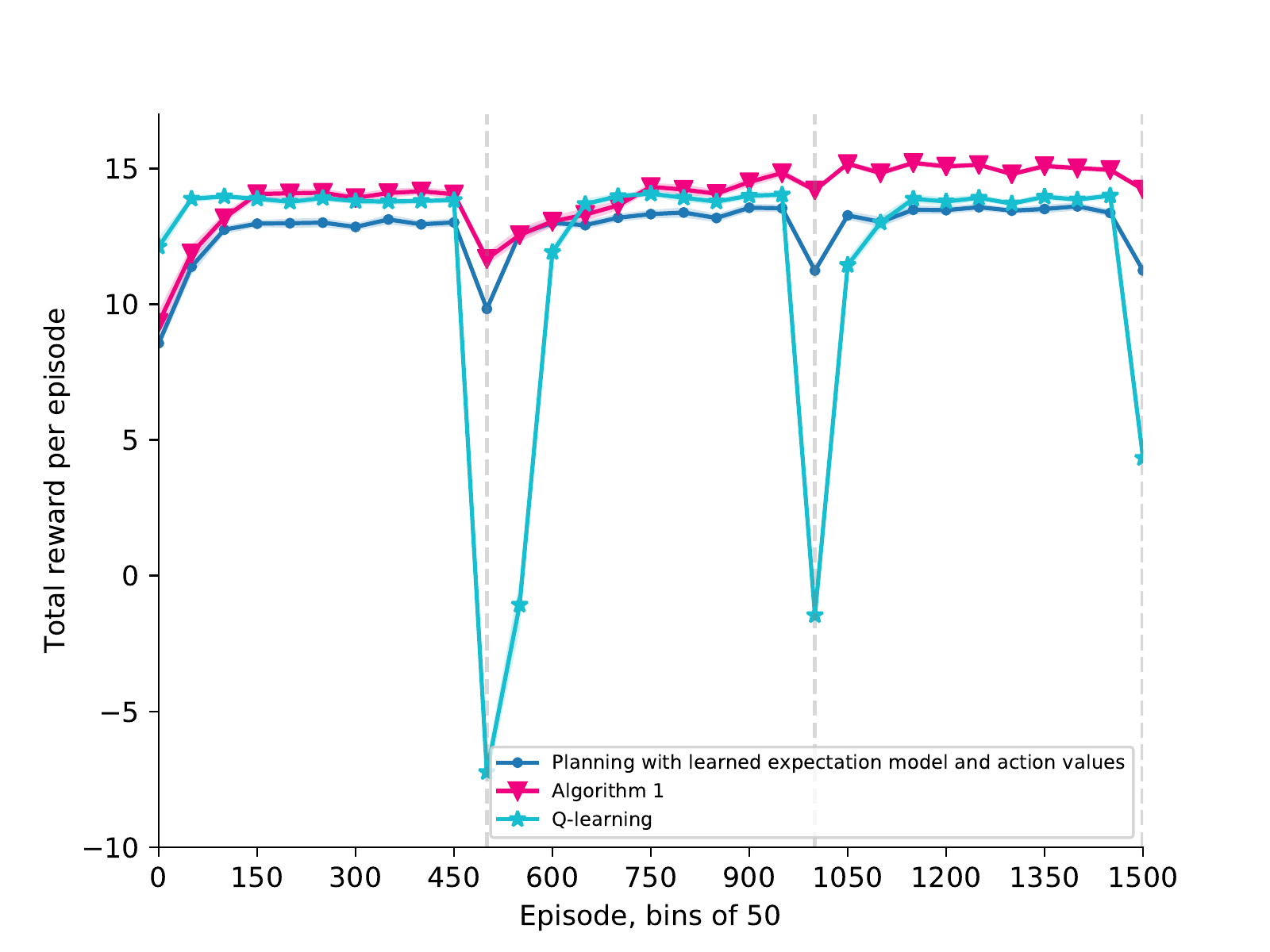}
  \label{fig:alg1}}
  \subfigure[]
  {\includegraphics[scale=0.45]{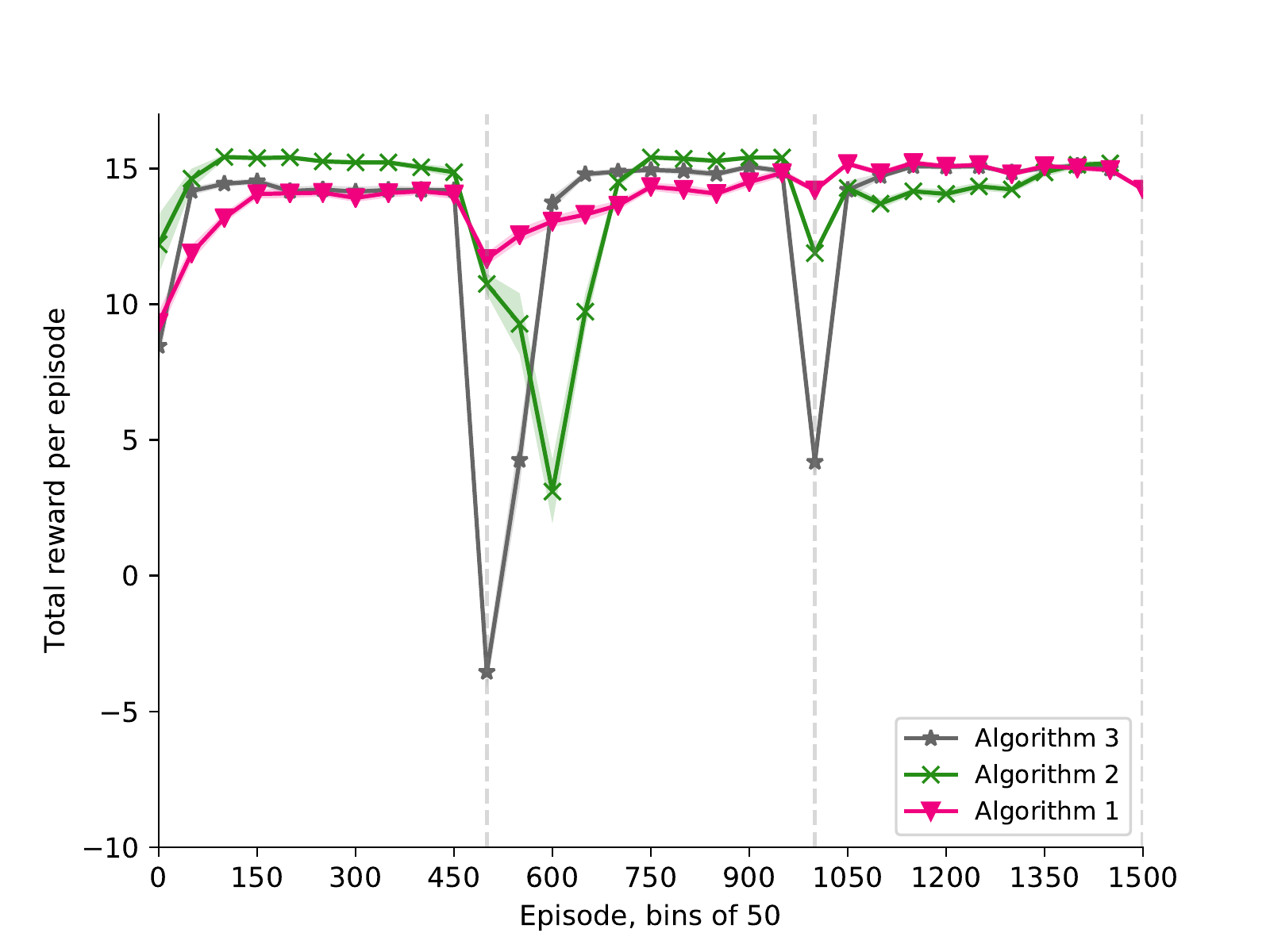}
  \label{fig:all_3_algo}}
  \caption{An illustration of the three proposed algorithms on the corridor domain. Each curve is an average of 30 runs; each point represents the average number of total reward per episode, averaged over all the runs and over temporal stretches of $50$-episode bins.
}
  \label{fig:3_fig}
  \vskip -0.2in 
\end{center}
\end{figure*}

In all cases, at the beginning of the second phase the agent starts from an adversarial value function that induces the agent to move only in one direction.
At the beginning of the third phase the agent no longer starts from a completely adversarial value function because it is not the optimal value function from the previous phase, which explains shorter recovery in phase $3$.

The higher the number of planning steps, the more efficient the agent is at the recovery.
Figure~\ref{fig:stoch_planning_steps} shows the effect of the number of planning steps 
on the recovery length and the impact on the drop.
It is clear that in a stochastic non-stationary environment planning  benefits the agent's performance. 
We further illustrate our three proposed algorithms in this
non-stationary domain in the setting of function approximation.

\textbf{Illustration of proposed algorithms.}
In this experiment on the corridor domain, the feature vectors for each state are random binary feature vectors of size $d=14$.
They all have the same $k=3$ number of $1$s, picked at random, without replacement. 
The phases were $500$ episodes.

The first two algorithms use $\epsilon$-greedy action during decision time.
The agents' performance was measured in terms of the total reward per episode achieved,
and shown over temporal stretches of $n$-episode bins (see figure captions).
The model is learned online, in addition to the batch update from ER,
minimizing mean squared error as described in Section~\ref{sec:3_algorithms}. 

Figure~\ref{fig:alg1} shows the learning curves of a Q-learning agent, a learned expectation model with action values, and a learned expectation model with state values (Algorithm 1).
When the goal is switched, Q-learning performance dropped to about $-7$ between phases 1 and 2.
The impact of the goal change diminished as the Q-learning agent learned about the opposite goal location in phase 2. 
However, its performance still dropped to about $4$ at the end of phase 3.
Planning with expectation model and action values did not perform well as expected (demonstrated earlier in Figure~\ref{fig:long_run}).
Expectation model with action values recovered faster from the goal change than the Q-learning agent and its drop was not that significant, however, the overall performance is worse
because of the incorrect update (described in Section \ref{sec:planning_w_action_values}).
Algorithm 1 performed well overall and had the shortest recovery
at every phase when the goal changed.

Figure~\ref{fig:all_3_algo} shows the three proposed algorithms on the corridor domain.
The results for Algorithm 2 are less clear.
Algorithm 2 took longer to recover from the goal change than Algorithm 1.
Algorithm 3 was also more sensitive to goal change which may be because
policy gradient methods get stuck easily in local optimas.
Yet, Algorithm 3 recovered faster than Q-learning and the drop was not as large.

\section{Discussion}

A disadvantage of Algorithm 1 is that it involves significant computation
at decision time (computing one-step backups for all actions).
This could add significant latency to the agent's responses during decision time, especially if the number of actions to be considered is large.
Algorithms 2 and 3 mitigate this issue to an extent thanks to an explicit action-value function and a policy respectively.
One strength of Algorithm 2 compared to Algorithm 1 is its flexibility with regards to when the action values are updated---for which states and actions (\ref{eq:alg2_update}) is performed.
Another strength is that the update (\ref{eq:alg2_update}) can be done at planning time
rather than at decision time, which is less costly in terms of latency.
Algorithm 3 has a similar flexibility in terms of updating the policy. 
On the flip side, Algorithms 2 and 3 have the higher number of parameters compared to Algorithm 1.

All kinds of models require a backup distribution to sample states from which planning backups will be performed. 
There are many choices available
(e.g., see Pan et al., 2019). A general choice can be to learn a sample model for the backups. Note that this sample model for backups only needs to produce a state, which makes it much simpler than a sample model of transitions that takes a state and action as input and produces a sample of the next state and reward.

Finally, note that with expectation models, we do not expect to perform multi-step rollout-style backups. This is because the expected next state may not correspond to any real observation; Jafferfee (2020) showed the planning from such states can be detrimental.
In contrast, sample models are amenable for multi-step rollout-style backups. However, this can also be detrimental to planning because of compounding errors (Talvitie, 2014; 2017).
Furthermore,
expectation models may require less parameter tuning than sample or distribution models.

\section{Conclusion}
Many problems considered in AI have the nature of non-stationarity and dynamically changing goals
paired with some level of stochasticity in a domain.
Expectation models have their benefits compared to sample models:
they are expected to have less variance and hence result in quicker learning,
their space and computational requirements are relatively smaller.
We showed in this paper that planning with expectation models must update state values.
The studies reported here represent general settings of MBRL with expectation models that
are independent of many choices such as that of state representation.
Interesting areas for future work lie in a few directions:
1) how different kinds of backup distributions affect planning;
2) how the degree of function approximation affects planning; and
3) comparison of one-step backups in expectation models and iterated backups in sample models, and
4) extension of the theory in this paper to the case of non-linear value functions.

\section*{Acknowledgements}
The authors were supported by DeepMind, NSERC, and
CIFAR. Katya Kudashkina was supported
by the Arrell Food Institute.
The authors also wish to thank the following people for the ideas, discussions, and support: Eric Graves, Graham Taylor, Jakob Foerster, Julie Vale, John D. Martin, Joseph Modayil, Matthew Schlegel, and Zach Holland.
Resources used in preparing this research were provided, in part, by the Province of Ontario, the Government of Canada through CIFAR, companies sponsoring the Vector Institute, Compute Ontario, and Compute Canada. 

\section*{References}

\parskip=5pt
\parindent=0pt
\def\hangin{\hangindent=0.2in}
\def\bibitem[#1]#2{\hangin}

\hangin
Abbeel, P., Coates, A., Quigley, M., and Ng, A. Y.~(2007). An application of reinforcement learning to aerobatic helicopter flight. In \emph{Proceedings of the 21st Conference on Neural Information Processing Systems}. pp. 1--8.

\hangin
Albus, J. S. (1971). A theory of cerebellar function.~\emph{Mathematical biosciences}, 10(1-2):25–61.

\hangin
Albus, J. S. (1981)~\emph{Brains, behavior, and robotics}. Byte Books, Peterborough, NH.

\hangin
Atkeson, C. G. and Santamaria, J. C. (1997). A comparison of direct and model-based reinforcement
learning. In~\emph{Proceedings of International Conference on Robotics and Automation}. vol. 4, pp. 3557--3564.

\hangin
Barto, A. G., Sutton, R. S., and Anderson, C. W. (1983). Neuronlike adaptive elements that can solve difficult learning control problems.~\emph{IEEE transactions on systems, man, and cybernetics}, SMC-13(5):834--846.

\hangin
Bellman, R. (1957).~\emph{Dynamic Programming}. Princeton University Press.

\hangin
Bertsekas, D. P. (2012).~\emph{Dynamic Programming and Optimal Control, Volume II:
Approximate Dynamic Programming}, fourth edition. Athena Scientific, Belmont, MA.

\hangin
Bertsekas, D. P., Tsitsiklis, J. N. (1996)~\emph{Neuro-Dynamic Programming}. Athena Scientific,
Belmont, MA.

\hangin
Browne, C. B., Powley, E., Whitehouse, D., Lucas, S. M., Cowling, P. I., Rohlfshagen, P., Tavener, S., Perez, D., Samothrakis, S., and Colton, S. (2012). A survey of Monte Carlo tree search methods. \emph{IEEE Transactions on Computational Intelligence and AI in games}, 4(1):1--43.

\hangin
Buckman, J., Hafner, D., Tucker, G., Brevdo, E., and Lee, H. (2018). Sample-Efficient Reinforcement Learning with Stochastic Ensemble Value Expansion. In~\emph{Proceedings of the 32nd Conference on Neural Information Processing Systems}, pp. 8224--8234.

\hangin
Cassandra, A. R., Kaelbling, L. P., and Littman, M. L. (1994). Acting optimally in partially observable stochastic domains. In~\emph{AAAI, Vol. 94}, pp. 1023--1028.

\hangin
Chapman, D. and Kaelbling, L. P. (1991). Input generalization in delayed reinforcement learning:
An algorithm and performance comparisons. In \emph{Proceedings of the Twelfth International Conference on Artificial Intelligence}, pp. 726--731. Morgan Kaufmann, San Mateo, CA.

\hangin
Chua, K., Calandra, R., McAllister, R., and Levine, S. Deep reinforcement learning in a handful of trials using probabilistic dynamics models. In \emph{Proceedings of the 31st Conference on Neural Information Processing Systems}, pp. pp. 4754–4765, Montr{\'e}al, Canada. Curran Associates, Inc. 2018.

\hangin
Deisenroth, M., Rasmussen, C. E. (2011). PILCO: A model-based and data-efficient approach to policy search. In \emph{Proceedings of the 28th International Conference on Machine Learning} (pp.~465--472). Omnipress.

\hangin
Doll, B. B., Simon, D. A., Daw, N. D. (2012). The ubiquity of model-based reinforcement
learning. \emph{Current Opinion in Neurobiology, 22}(6):1--7.

\hangin
Farley, B. and Clark, W. (1954). Simulation of self-organizing systems by digital computer.~\emph{Transactions of the IRE Professional Group on Information Theory}, 4(4):76–84.

\hangin
Feinberg, V., Wan, A., Stoica, I., Jordan, M. I., Gonzalez, J. E., and Levine, S. (2018).
Model-based value estimation for efficient model-free reinforcement learning. ArXiv:1803.00101.

\hangin
Finnsson, H. and Bj{\"o}rnsson, Y. (2008). Simulation-Based Approach to General Game Playing. In \emph{AAAI}, volume 8, pp.259--264.

\hangin
Gu, S., Lillicrap, T., Sutskever, I., and Levine, S. (2016). Continuous deep q-learning with model-based acceleration. In \emph{Proceedings of the 33rd International Conference on International Conference on Machine Learning, Volume 48}, JMLR.org.

\hangin
Ha, D. and Schmidhuber, J. World models. (2018). In~\emph{Proceedings of the 32nd Conference on Neural Information Processing Systems}, pp. 2451--2463.

\hangin
Harmon, M. E. and Baird III, L. C. (1996). Spurious Solutions to the Bellman Equation. Technical report, Citeseer.

\hangin
Hessel, M., Modayil, J., van Hasselt, H., Schaul, T., Ostrovski, G., Dabney, W., Horgan, D., Piot, B., Azar, M. G., and Silver, D. (2018). Rainbow: Combining Improvements in Deep Reinforcement Learning. In \emph{AAAI}, pp. 3215--3222.

\hangin
Hester, T. and Stone, P. (2011). Learning and using models. Wiering, M. and van Otterlo, M. (eds.), \emph{Reinforcement Learning: State of the Art}. Springer Verlag, Berlin, Germany.

\hangin
Hochreiter, S. and Schmidhuber, J. (1997). Long Short-Term Memory. \emph{Neural Computation, 9}(8):1735--1780.

\hangin
Holland, G. Z., Talvitie, E. J., and Bowling, M. (2018). The Effect of Planning Shape on Dyna-style
Planning in High-dimensional State Spaces. ArXiv:1806.01825.

\hangin
Jafferjee, T. (2020). Chasing Hallucinated Value: A Pitfall of Dyna Style Algorithms with Imperfect Environment Models. Master’s thesis, The University of Alberta.

\hangin
Kaelbling, L. P. (1993a). Hierarchical learning in stochastic domains: Preliminary results. In \emph{Proceedings of the 10th International Conference on Machine Learning}, pp. 167--173. Morgan Kaufmann.

\hangin
Kaelbling, L. P. (1993b). Learning to achieve goals. In \emph{Proceedings of the Thirteenth International Joint Conference on Artificial Intelligence}, pp. 1094--1099.

\hangin
Kaiser, L., Babaeizadeh, M., Milos, P., Osinski, B., Camp- 359 bell, R. H., Czechowski, K., Erhan, D., Finn, C., Kozakowski, P., Levine, S., Mohiuddin, A., Sepassi, R., Tucker, G., and Michalewski, H. (2019). Model-based reinforcement learning for Atari. ArXiv:1903.00374.

\hangin
Kingma, D. P. and Welling, M. (2014). Auto-Encoding Variational Bayes. In \emph{Proceedings of the 2nd International Conference on Learning Representations}.

\hangin
Kurutach, T., Clavera, I., Duan, Y., Tamar, A., and Abbeel, P. (2018). Model-ensemble trust-region policy optimization. In \emph{Proceedings of the 6th International Conference on Learning Representations}.

\hangin
Kuvayev, L. and Sutton, R. S. (1996). Model-based reinforcement learning with an approximate, learned model. In \emph{Proceedings of the Ninth Yale Workshop on Adaptive and Learning Systems}, pp. 101--105. Citeseer.

\hangin
Leibfried, F., Kushman, N., and Hofmann, K. (2017). A Deep Learning Approach for Joint Video Frame and Reward Prediction in Atari Games. In \emph{Workshop on Principled Approaches to Deep Learning, International Conference on Machine Learning}, Sydney, Australia.

\hangin
Lillicrap, T. P., Hunt, J. J., Pritzel, A., Heess, N., Erez, T., Tassa, Y., Silver, D., and Wierstra, D. (2016). Continuous control with deep reinforcement learning. In \emph{Proceedings of the 4th International Conference on Learning Representations (Poster)}.

\hangin
Lin, L.-J. (1992). Self-improving reactive agents based on reinforcement learning, planning and teaching. \emph{Machine learning}, 8(3-4):293--321.

\hangin
Littman, M. L., Sutton, R. S., Singh. (2002). Predictive representations of state. In~\emph{Proceedings of the 14th Conference on Neural Information Processing Systems}, pp. 1555--1561. MIT Press,
Cambridge, MA.

\hangin
Mnih, V., Kavukcuoglu, K., Silver, D., Graves, A., Antonoglou, I., Wierstra, D., et al.~(2013).~Playing
Atari with Deep Reinforcement Learning.~In \emph{Deep Learning Workshop, Neural Information Processing Systems}.

\hangin
Mnih, V., Kavukcuoglu, K., Silver, D., Rusu, A. A., Veness, J., Bellemare, M. G., Graves, A., Riedmiller, M., Fidjeland, A. K., Ostrovski, G., Petersen, S., Beattie, C., Sadik, A., Antonoglou, I., King, H., Kumaran, D., Wierstra, D., Legg, S., Hassabis, D. (2015). Human-level control through deep reinforcement learning. \emph{Nature}, 518(7540):529--533.

\hangin
Moore, A. W., Atkeson, C. G. (1993). Prioritized sweeping: Reinforcement learning with less
data and less real time.~\emph{Machine Learning}, 13(1):103--130.

\hangin
Naik, A., Shariff, R., Yasui, N., and Sutton, R. S. (2019). Discounted reinforcement learning is not an optimization problem. \emph{Optimization Foundations of Reinforcement Learning Workshop at Neural Information Processing Systems}.

\hangin
Oh, J., Guo, X., Lee, H., Lewis, R. L., Singh, S. (2015). Action-conditional video prediction using deep networks in Atari games. In \emph{Proceedings of the 28th Conference on Neural Information Processing Systems}, pp. 2845--2853. Curran Associates, Inc.

\hangin
Pan Y., Zaheer M., White A., Patterson A., and White M. (2018). Organizing experience: a deeper look at replay mechanisms for sample-based planning in continuous state domains. In \emph{Proceedings of the 27th International Joint Conference on Artificial Intelligence}. AAAI Press, 4794--4800.

\hangin
Pan Y., Yao H., Farahmand A.M., White M. (2019). Hill Climbing on Value Estimates for Search-control in Dyna. In \emph{Proceedings of the 28th International Joint Conference on Artificial Intelligence}.

\hangin
Peng, J. and Williams, R. J. (1993). Efficient learning and planning within the Dyna framework. \emph{Adaptive behavior}, 1(4): 437--454.

\hangin
Rezende D.J., Mohamed S., Wierstra D. (2014). Stochastic backpropagation and
approximate inference in deep generative models. \emph{Proceedings of the 31st International Conference on Machine Learning}.

\hangin
Ring, M. B. (1994). \emph{Continual learning in reinforcement environments}. PhD thesis, University of Texas at Austin.

\hangin
Schaul, T., Horgan, D., Gregor, K., \& Silver, D. (2015). Universal value function approximators. In \emph{Proceedings of the 32nd International Conference on Machine
Learning}, pp. 1312--1320.  JMLR: W\&CP volume 37.

\hangin
Schaul, T., Quan, J., Antonoglou, I., and Silver, D. (2016). Prioritized experience replay. In \emph{Proceedings of the 4th International Conference on Learning Representations (Poster)}.

\hangin
Schmidhuber, J. (2015). Deep learning in neural networks: An overview. \emph{Neural Networks}, 6 :85--117.

\hangin
Schrittwieser, J., Antonoglou, I., Hubert, T., Simonyan, K., Sifre, L., Schmitt, S., Guez, A., Lockhart, E., Hassabis, D., Graepel, T., Lillicrap, T., and Silver, D. (2019). Mastering Atari, Go, chess and shogi by planning with a learned model. In \emph{Proceedings of the 33rd Conference on Neural Information Processing Systems}, Vancouver, Canada.

\hangin
Shariff, R.; and Szepesv{\'a}ri, C. (2020). Efficient Planning in Large MDPs with Weak Linear Function Approximation.  In \emph{Proceedings of the 34th Conference on Neural Information Processing Systems}, Vancouver, Canada.

\hangin
Silver, D., Lever, G., Heess, N., Degris, T., Wierstra, D., Riedmiller, M. (2014). Deterministic policy gradient algorithms. In \emph{Proceedings of the 31st International Conference on Machine Learning}, pp. 387--395.

\hangin
Silver, D., Schrittwieser, J., Simonyan, K., Antonoglou, I., Huang, A., Guez, A., Hubert, T., Baker, L., Lai, M., Bolton, A., Chen, Y., Lillicrap, L., Hui, F., Sifre, L., van den Driessche, G., Graepel, T., Hassibis, D. (2017). Mastering the game of Go without human knowledge. \emph{Nature}, 550(7676):354--359.

\hangin
Singh, S. P. Reinforcement learning with a hierarchy of abstract models. (1992). In \emph{Proceedings of the National Conference on Artificial Intelligence}, 10, p.~202.

\hangin
Singh, S. P., Jaakkola, T., \& Jordan, M. I. (1995). Reinforcement learning with soft state aggregation. In \emph{Proceedings of the 8th Conference on Neural Information Processing Systems}, pp. 361-368.

\hangin
Sorg, J., Singh, S. Linear options. (2010). In \emph{Proceedings of the 9th International Conference on
Autonomous Agents and Multiagent Systems}, pp. 31--38.

\hangin
Sugiyama, M., Hachiya, H., Morimura, T. (2013). \emph{Statistical Reinforcement Learning: Modern Machine Learning Approaches}. Chapman \& Hall/CRC.

\hangin
Sutton, R. S. (1988). Learning to predict by the methods of temporal differences.~\emph{Machine learning}, 3(1): 9–44.

\hangin
Sutton, R. S. (1990). Integrated architectures for learning, planning, and reacting based on approximating dynamic programming. In \emph{Proceedings of the 7th International Conference on Machine Learning }, pp. 216--224. Morgan Kaufmann.

\hangin
Sutton, R. S. (1991). Dyna, an integrated architecture for learning, planning, and reacting.
\emph{SIGART Bulletin}, 2(4):160–163. ACM, New York.

\hangin
Sutton, R. S. and Pinette, B. (1985). The learning of world models by connectionist networks. In
\emph{Proceedings of the 7th Annual Conference of the Cognitive Science Society}. pp. 54--64.

\hangin
Sutton, R.S., Precup, D., Singh, S. (1998). Intra-option learning about temporally abstract actions. \emph{Proceedings of the 15th International Conference on Machine Learning}, pp. 556-564. Morgan Kaufmann.

\hangin
Sutton, R. S. , Rafols, E. J. , and Koop, A. (2005). Temporal Abstraction in Temporal-difference Networks. In \emph{Proceedings of the 19th Conference on Neural Information Processing Systems}.

\hangin
Sutton, R. S., Szepesv{\'a}ri, Cs., Geramifard, A., Bowling, M. (2008). Dyna-style planning with linear function approximation and prioritized sweeping, In \emph{Proceedings of the 24th Conference on Uncertainty in Artificial Intelligence}, pp. 528--536.

\hangin
Sutton, R. S. and Barto, A. G. (1981). An adaptive network that constructs and uses and internal model of its world. \emph{Cognition and Brain Theory 4}, 217--246.

\hangin
Sutton, R. S. and Barto, A. G. (2018).~\emph{Reinforcement learning: An Introduction}. MIT press.

\hangin
Sutton, R. S. (2019). \textit{The Bitter Lesson} [Blog post]. Retrieved from http://www.incompleteideas.net/IncIdeas\\/BitterLesson.html.

\hangin
Talvitie, E. (2014). Model Regularization for Stable Sample Rollouts. In \emph{Proceedings of the 30th Conference on Uncertainty in Artificial Intelligence}, pp. 780--789.

\hangin
Talvitie, E. (2017). Self-correcting Models for Model-based Reinforcement Learning. In \emph{Proceedings of the 31st AAAI Conference on Artificial Intelligence}.

\hangin
van Hasselt, H. (2010). Double Q-learning. In \emph{Proceedings of the 24th Conference on Neural Information Processing Systems}, pp.~2613--2621.

\hangin
van Hasselt, H., Guez, A., and Silver, D. (2016). Deep Reinforcement 343 Learning with Double Q-Learning. In \emph{Proceedings of the Thirtieth AAAI Conference on Artificial Intelligence}, AAAI Press.

\hangin
van Hasselt, H. P., Hessel, M., and Aslanides, J. (2019). When to use parametric models in reinforcement learning? In \emph{Proceedings of the 33rd Conference on Neural Information Processing Systems}, pp. 14322--14333.

\hangin
van Seijen, H., Nekoei, H., Racah, E., and Chandar, S. (2020). The LoCA Regret: A Consistent Metric to Evaluate Model-Based Behavior in Reinforcement Learning. In \emph{Proceedings of the 34th Conference on Neural Information Processing Systems}.

\hangin
Wan, Y., Zaheer, M., White, A., White, M., and Sutton, R. S. (2019). Planning with expectation models. In \emph{Proceedings of the 28th International Joint Conference on Artificial Intelligence}.

\hangin
Watkins, C. (1989).\emph{Learning from delayed rewards.} PhD thesis, King’s College, Cambridge, England.

\hangin
Weber, T., Racani{\`e}re, S, Reichert, D.P., Buesing, L., Guez, A., Rezende, D., Badia, A.P., Vinyals, O., Heess, N., Li Y, Pascanu, R. (2017). Imagination-augmented Agents for Deep Reinforcement Learning. \emph{Proceedings of the 31st Conference on Neural Information Processing Systems}, Long Beach, CA, USA. Curran Associates Inc.

\hangin
Wiering, M., Salustowicz, R., and Schmidhuber, J. (2001). Model-based reinforcement learning for evolving soccer strategies. In \emph{Computational intelligence in games}, pp. 99--132. Springer.

\hangin
White A. (2015). \emph{Developing a predictive approach to knowledge}. PhD thesis, University of Alberta.

\hangin
Yao, H., Bhatnagar, S., and Diao, D. (2009). Multi-step linear Dyna-style planning. In \emph{Proceedings of the 22nd International Conference on Neural Information Processing Systems}, pp. 2187--2195. Curran Associates Inc.

\clearpage
\appendix

\section{Algorithms}

\removelatexerror
\begin{center}

\begin{algorithm}[H]
\DontPrintSemicolon
\SetAlgoLined
\SetKwInput{AP}{Algorithm parameters}
\AP{step size $\alpha_{\bw}, \alpha_{\be}$}
Initialize state-value parameters $\bw$, expectation-model parameters $\be$\;
Compute state $\s$ with the state-update function $u$\;
 \While{still time to train}
 {
    $A = \argmax_{a\in \Actions} \big[r(\s,a) + \gamma \bar \s(\s,a)^\top\bw \big]$ \ \ \ {\small(lookahead)}\;
    Take action $A$, observe $R, O'$\;
    Compute state $\s'$ with the state-update function $u$\;
    $\delta \leftarrow R + \bw^\top \s' - \bw^\top \s$\;
    $\bw \leftarrow \bw + \alpha \delta 
    \nabla (\bw^\top \s)$\;
    Update model parameters $\be$.\;
    \While{still time to plan}
    {
        Sample state $\s$ from the backup distribution\;
        $g(\s, \bw)  = \max_{a \in \Actions} \big[ r(\s,a) + \gamma \bw^\top \bar{\s}(\s,a) \big]$\;
        $\bw \leftarrow \bw + \alpha_{\bw} (g(\s, \bw)  - \bw^\top \s)
        \nabla (\bw^\top \s)$\;
    }
    $\s = \s'$\;
 }
 \caption{Learning and planning with an expectation model; one-step lookahead at decision time}
 \label{algo:plan_v_lookahead}
\end{algorithm}

\begin{algorithm}[H]
\DontPrintSemicolon
\SetAlgoLined
\SetKwInput{AP}{Algorithm parameters}
\AP{step size $\alpha_{\bw}$, $\alpha_{\bw_q}, \alpha_{\be}$}
Initialize state-value parameters $\bw$,
action-value parameters $\bw_q $, expectation-model parameters $\be$\;
Compute state $\s$ with the state-update function  $u$\;
 \While{still time to train}
 {
    $A = \argmax_{a\in \Actions} \hat q(\s, a, \bw)$\;
    Take action $A$, observe $R, O'$\;
    Compute state $\s'$ with the state-update function $u$\;
    $\delta \leftarrow R + \bw^\top \s' - \bw^\top \s$\;
    $\bw \leftarrow \bw + \alpha \delta 
    \nabla (\bw^\top \s)$\;
    $\bw_q \leftarrow \bw_q + \alpha_{\bw_q} \bigg(\delta - \hat q(\s, a, \bw)\bigg) \nabla \hat q(\s, a, \bw)$\;
    \normalsize
    Update model parameters $\be$.\;
    \While{still time to plan}
    {
        Sample state $\s$ from the backup distribution\;
        $g(\s, \bw)  = \max_{a \in \Actions} \big[ r(\s,a) + \gamma \bw^\top \bar{\s}(\s,a) \big]$\;
        $\bw \leftarrow \bw + \alpha_{\bw} (g(\s, \bw)  - \bw^\top \s)
        \nabla (\bw^\top \s)$\;
        $\slash\slash$ Update cached action-value function\;
        $b(\s,a, \bw) \leftarrow r(\s,a) + \bar \s(\s,a)^\top\bw$\;
        \fontsize{7.5}{6}
        $\bw_q \leftarrow \bw_q + \alpha_{\bw_q} \bigg(b(\s,a, \bw) - \hat q(\s, a, \bw)\bigg) \nabla \hat q(\s, a, \bw)$\;
        \normalsize
    }
    $\s = \s'$\;
 }
 \caption{Act with action-value function that is updated during planning from backup values.}
 \label{algo:plan_q_lookahead_cached}
\end{algorithm}
\normalsize
\end{center}

\removelatexerror
\begin{algorithm*}[H]
\DontPrintSemicolon
\SetAlgoLined
\SetKwInput{AP}{Algorithm parameters}
\AP{step size $\alpha_{\bw}$, $\alpha_{\theta}, \alpha_{\be}$}
Initialize state-value parameters $\bw$, 
policy parameters $\bth $,
expectation-model parameters $\be$\;
Compute state $\s$ with the state-update function $u$\;
 \While{still time to train}
 {
    $A \sim \pi_{\bth}$\;
    Take action $A$, observe $R, O'$\;
    Compute state $\s'$ with the state-update function $u$\;
    $\delta \leftarrow R + \bw^\top \s' - \bw^\top \s$\;
    $\bw \leftarrow \bw + \alpha_{\bw} \delta 
    \nabla (\bw^\top \s)$\;
    $\bth \leftarrow \bth + \alpha_{\bth} \delta \nabla \big(\log \pi_{\bth}(\s,a)\big)$\;
    Update model parameters $\be$.\;
    \While{still time to plan}
    {
        Sample state $\s$ from the backup distribution\;
        $g(\s, \bw)  = \max_{a \in \Actions} \big[ r(\s,a) + \gamma \bw^\top \bar{\s}(\s,a) \big]$\;
        $\bw \leftarrow \bw + \alpha_{\bw} (g(\s, \bw)  - \bw^\top \s) \nabla (\bw^\top \s)$\;
        $\slash\slash$ Update policy parameters\;
        {
            Sample action $a$ from $\pi_\theta(\s,\cdot)$\;
            $\delta \leftarrow r(\s,a) + \gamma \bar \s(\s,a)^\top\bw - \s^\top\bw $\;
            $\bth \leftarrow \bth + \alpha_{\bth} \delta \nabla \big(\log \pi_{\bth}(\s,a)\big)$\;
        }
    }
    $\s = \s'$\;
 }
 \caption{Act with an explicit parameterized policy.}
 \label{algo:plan_with_policy}
\end{algorithm*}
\normalsize

\clearpage
\onecolumn
\section{Parameters}
In all experiments:
\begin{itemize}
\setlength\itemsep{-0.5em}
    \item The discount parameter $\gamma$ was $1$.
    \item All the initial weight parameters were initialized to zeros.
    \item The backup distribution in planning was from ER.
    \item $\epsilon$-greedy policy with $\epsilon= 0.1$.
\end{itemize}

We performed parameter studies during which we ran algorithms for $1000$ episodes. The results for each of the parameter sets were averaged based on 30 runs. The final parameters were chosen based on the best performing parameter set.


\subsection{Monster Counterexample, Figure 1}
\begin{table}[h]
{%
\begin{tabular}{|l|c|c|}
\hline
                                   & Q-learning         & Planning with expectation model and action values \\ \hline
Stochasticity during decision time & \multicolumn{2}{c|}{0.5}                                               \\ \hline
Action-value function step size    & 0.3                & 0.3                                               \\ \hline
Model step size                    & \multirow{2}{*}{-} & 0.3                                               \\ \cline{1-1} \cline{3-3} 
Number of planning steps           &                    & 20                                                \\ \hline
Feature selection                  & \multicolumn{2}{c|}{One-hot encoding}                                  \\ \hline
\end{tabular}%
}
\end{table}

\subsection{Stochastic corridor, Figure 3}
\begin{table}[h]
{%
\begin{tabular}{|l|c|c|}
\hline
                                   & Q-learning & Planning with expectation model and action values \\ \hline
Stochasticity during decision time & \multicolumn{2}{c|}{0.3}                                       \\ \hline
Action-value function step size    & 0.01       & 0.01                                              \\ \hline
Model step size                    & -          & 0.1                                               \\ \hline
Number of planning steps           &            & 20                                                \\ \hline
Feature selection & \multicolumn{2}{c|}{Feature-vector of size $d=14$ with $k=5$ random bits} \\ \hline
\end{tabular}%
}
\end{table}

\subsection{Stochastic corridor, Figure 4a}
\begin{table}[h]
{%
\begin{tabular}{|l|c|c|}
\hline
                                   & Q-learning & Planning with expectation model and action values \\ \hline
Stochasticity during decision time & \multicolumn{2}{c|}{0.3}                                       \\ \hline
Action-value function step size    & 0.1        & 0.1                                               \\ \hline
Model step size                    & -          & 0.1                                               \\ \hline
Number of planning steps           & -          & 20                                                \\ \hline
Feature selection                  & \multicolumn{2}{c|}{One-hot encoding}                          \\ \hline
Goal change                        & \multicolumn{2}{c|}{Every 500 episodes}                        \\ \hline
\end{tabular}%
}
\end{table}

\subsection{Stochastic corridor, Figure 4b}
\begin{table*}[!h]
{
\begin{tabular}{|l|c|c|}
\hline
                                   & Q-learning & Planning with a true environment model \\ \hline
Stochasticity during decision time & \multicolumn{2}{c|}{0.3}                            \\ \hline
Action-value function step size    & 0.1        & 0.1                                    \\ \hline
Number of planning steps           &            & 20                                     \\ \hline
Feature selection                  & \multicolumn{2}{c|}{One-hot encoding}               \\ \hline
Goal change                        & \multicolumn{2}{c|}{Every 10000 episodes}           \\ \hline
\end{tabular}%
}
\end{table*}

\clearpage
\subsection{Stochastic corridor, Figure 5}
\begin{table}[h]
{%
\begin{tabular}{|l|c|c|c|c|c|}
\hline
\multirow{3}{*}{} & \multirow{3}{*}{Q-learning} & \multicolumn{4}{c|}{Planning with expectation model} \\ \cline{3-6} 
                                   &       & \multirow{2}{*}{with action values} & \multicolumn{3}{c|}{with state values}  \\ \cline{4-6} 
                                   &       &                                     & algorithm 1 & algorithm 2 & algorithm 3 \\ \hline
Stochasticity during decision time & \multicolumn{5}{c|}{0.1}                                                              \\ \hline
Action-value function step size    & 0.001 & 0.1                                 & -           & 0.01        & -           \\ \hline
Model step size                    & -     & 0.1                                 & 0.1         & 0.001       & 0.01        \\ \hline
Number of planning steps           &       & 5                                   &             &             &             \\ \hline
Feature selection                  & \multicolumn{5}{c|}{Feature-vector of size $d=14$ with $k=5$ random bits}             \\ \hline
State-value function step size     & \multicolumn{2}{c|}{\multirow{2}{*}{-}}     & 0.01        & 0.01        & 0.01        \\ \cline{1-1} \cline{4-6} 
Policy step size                   & \multicolumn{2}{c|}{}                       & -           & -           & 0.001       \\ \hline
Goal change                        & \multicolumn{5}{c|}{Every 500 episodes}                                               \\ \hline
\end{tabular}%
}
\end{table}




\end{document}